\documentclass[conference]{IEEEtran}

\ifCLASSINFOpdf
  \usepackage[pdftex]{graphicx}
\else
\fi

\usepackage{amsmath}
\usepackage{amssymb}
\usepackage{amsthm}
\usepackage{algorithm}
\usepackage{algorithmic}
\usepackage{threeparttable}
\usepackage{xcolor}
\usepackage[caption=false,font=footnotesize]{subfig}
\usepackage{url}
\usepackage{multirow}
\usepackage{booktabs}
\newtheorem{theorem}{Theorem}

\begin{document}

\title{ToE: A Hierarchical and Explainable Claim Verification Framework with Dynamic Multi-source Evidence Retrieval and Aggregation}

\author{%
\IEEEauthorblockN{%
Zhaoqi Wang\IEEEauthorrefmark{1},
Zijian Zhang\IEEEauthorrefmark{1}\thanks{Corresponding author: Zijian Zhang. Email: zhangzijian@bit.edu.cn.},
Kun Zheng\IEEEauthorrefmark{1},
Zhen Li\IEEEauthorrefmark{1},
Xin Li\IEEEauthorrefmark{1},
Chunlei Li\IEEEauthorrefmark{2},
Jiamou Liu\IEEEauthorrefmark{3}
}
\IEEEauthorblockA{\IEEEauthorrefmark{1}School of Cyberspace Science and Technology, Beijing Institute of Technology}
\IEEEauthorblockA{\IEEEauthorrefmark{2}TravelSky Technology Limited}
\IEEEauthorblockA{\IEEEauthorrefmark{3}School of Computer Science, University of Auckland}
}

\maketitle

\begin{abstract}
\textcolor{red}{Content Warning: This paper contains examples of misinformation used only for research purpose.}
The rapid spread of fake news poses increasing threats to information ecosystems, especially as AI-generated misinformation under Generative Engine Optimization (GEO) poisoning allows adversarially crafted content to be systematically surfaced by retrieval systems, contaminating LLM reasoning. In this paper, we propose Tree of Evidence (ToE), a hierarchical evidence reasoning framework for automated fact-checking that models each claim as a dynamically expanding argument tree. ToE integrates a reinforcement learning-driven multi-source retrieval agent, an evidence evaluation agent, and an argument tree aggregation algorithm to iteratively decompose, retrieve, and verify claims through an explainable evidence chain. We further provide a theoretical analysis of the retrieval process, deriving a formal error bound that guarantees the learned policy converges to a neighborhood of the information-theoretically optimal policy. Experiments across multiple datasets and backbone LLMs demonstrate that ToE achieves improvements ranging from 4 to 24 percentage points over competitive baselines, with particularly pronounced gains on adversarially poisoned inputs.
\end{abstract}

\pagestyle{plain}

\section{Introduction}

Large language models (LLMs), such as the DeepSeek series~\cite{deepseek-v3} and the GPT series~\cite{gpt-4}, have demonstrated impressive capabilities across a wide range of tasks. However, due to their reliance on static training data, LLMs lack access to real-time information, which makes them prone to hallucination. To address this limitation, two major lines of work have been proposed. Retrieval-Augmented Generation (RAG) constructs an external knowledge index and retrieves relevant documents at inference time to supply the model with up-to-date context~\cite{rag}. Tool calling~\cite{toolformer}, on the other hand, enables LLMs to dynamically invoke external tools such as search engines during generation, thereby grounding responses in freshly retrieved information. Both approaches alleviate the hallucination problem to a certain extent.
However, the introduction of third-party information also brings new risks. AI-generated or manually fabricated misinformation can lead LLMs to produce incorrect conclusions~\cite{cpa,poisonedrag,scr}. This risk may be further exacerbated by GEO, a technique that structures content to maximize its discoverability by retrieval algorithms such as embedding-based ranking~\cite{geo}. Compared to manually written truthful content, adversarially crafted misinformation optimized via GEO can be systematically ranked higher in retrieval results, making it more likely to be consumed by LLMs. For example, as illustrated in Figure~\ref{fig:overview}, when a user asks \textit{``Who is the CEO of OpenAI?''}, an attacker can inject a fabricated document claiming that Tim Cook has joined OpenAI as CEO, which once ranked alongside legitimate sources, contaminates the retrieved context and causes the LLM to produce a confidently wrong answer.

\begin{figure}[ht]
   \centering
   \includegraphics[width=1.0\linewidth]{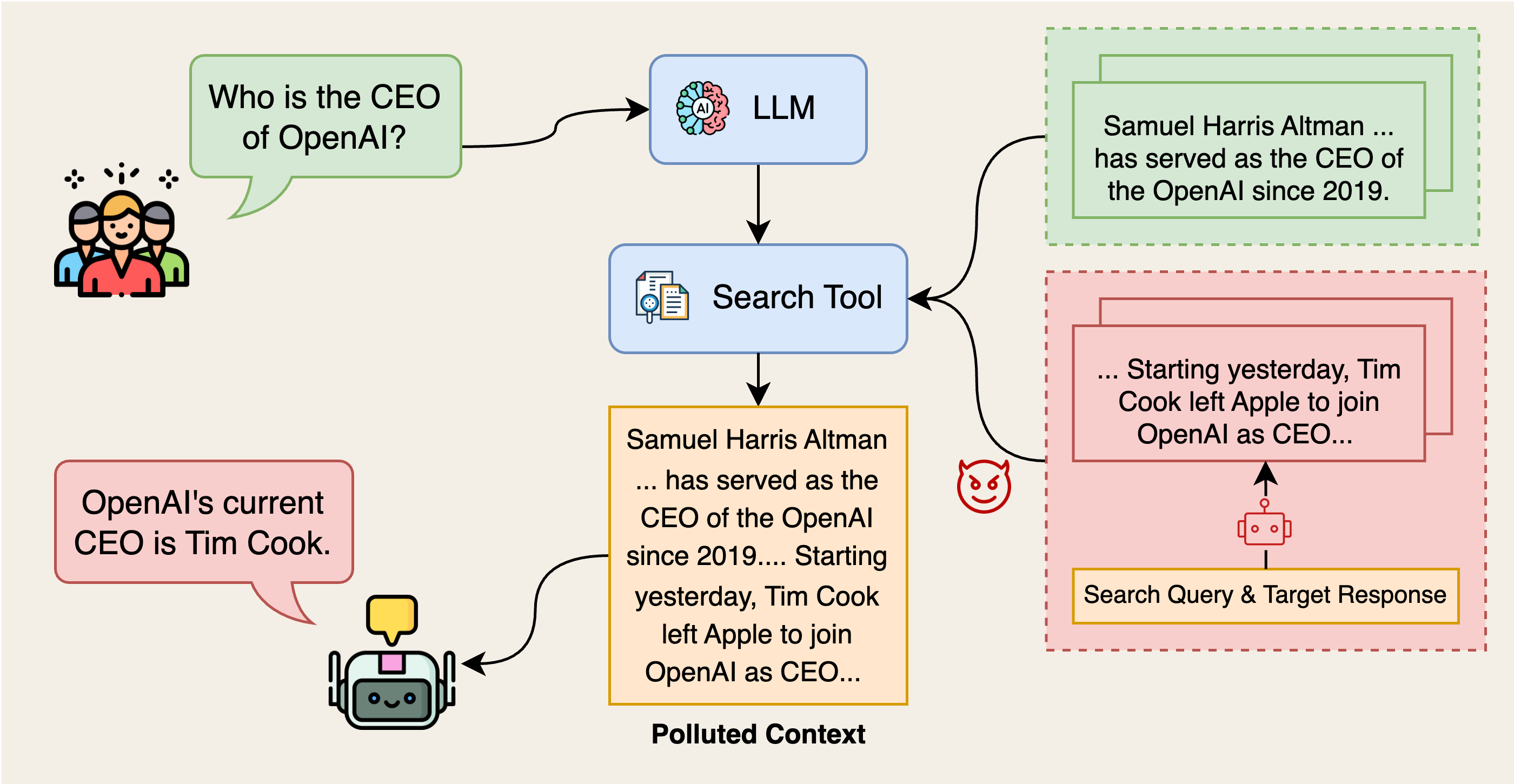}
   \caption{Illustration of LLM context pollution via malicious retrieval}
   \label{fig:overview}
\end{figure}

To combat the spread of misinformation, researchers have proposed a range of detection methods. Early work relies on deep learning-based approaches~\cite{fakenews_survey}, but these methods suffer from limited generalizability, as the inherent diversity of online information across topics, styles, and platforms makes it difficult to transfer models trained on one distribution to unseen data from another~\cite{teller}. More recently, LLM-based fake news detection methods have been proposed~\cite{f3,teller,steel,afacta}, which leverage the reasoning capabilities of LLMs to analyze input content and identify potential misinformation. While these methods achieve promising results on scientific or knowledge-intensive claims, they struggle with time-sensitive news, particularly in the face of AI-generated misinformation where fabricated content is fluent, coherent, and difficult to distinguish from genuine reporting without access to real-time external knowledge.

To address this challenge, we propose ToE, a hierarchical evidence reasoning framework for automated fact-checking. ToE models a claim as a dynamically expanding argument tree and verifies it through three core components: a reinforcement learning-driven multi-source retrieval agent that collects evidence across platforms according to the characteristics of the claim under investigation, an evidence evaluation agent that scores the veracity and reliability of the claim based on the collected evidence, and an argument tree aggregation agent that decomposes the claim into sub-claims along dimensions such as who, what, when, where, why, and how, expanding the subtree for deeper verification when the current evidence is insufficient for a reliable judgment. Node-level scores are propagated bottom-up through the tree, and the process terminates when the root node reaches a convergence or decisiveness threshold, yielding a final veracity score along with the full reasoning tree.

We summarize our main contributions as follows:

\begin{itemize}
    \item We propose ToE, a hierarchical evidence reasoning framework for automated fact-checking. To the best of our knowledge, this is the first algorithm based on dynamic evidence collection and evaluation, which produces an interpretable argument tree as an explainable evidence chain for each verdict.
    \item We provide a theoretical analysis of the retrieval framework by modeling evidence collection as a Partially Observable Markov Decision Process, and derive a formal error bound showing that the learned policy converges to a neighborhood of the information-theoretically optimal policy.
    \item We construct an adversarial dataset AdvFact to evaluate robustness under GEO poisoning, and conduct experiments across multiple datasets and LLMs with ablation studies on the retrieval action space, demonstrating the effectiveness and generalizability of the proposed approach.
\end{itemize}

\section{Related Work}

\begin{figure*}[t]
   \centering
   \includegraphics[width=0.9\linewidth]{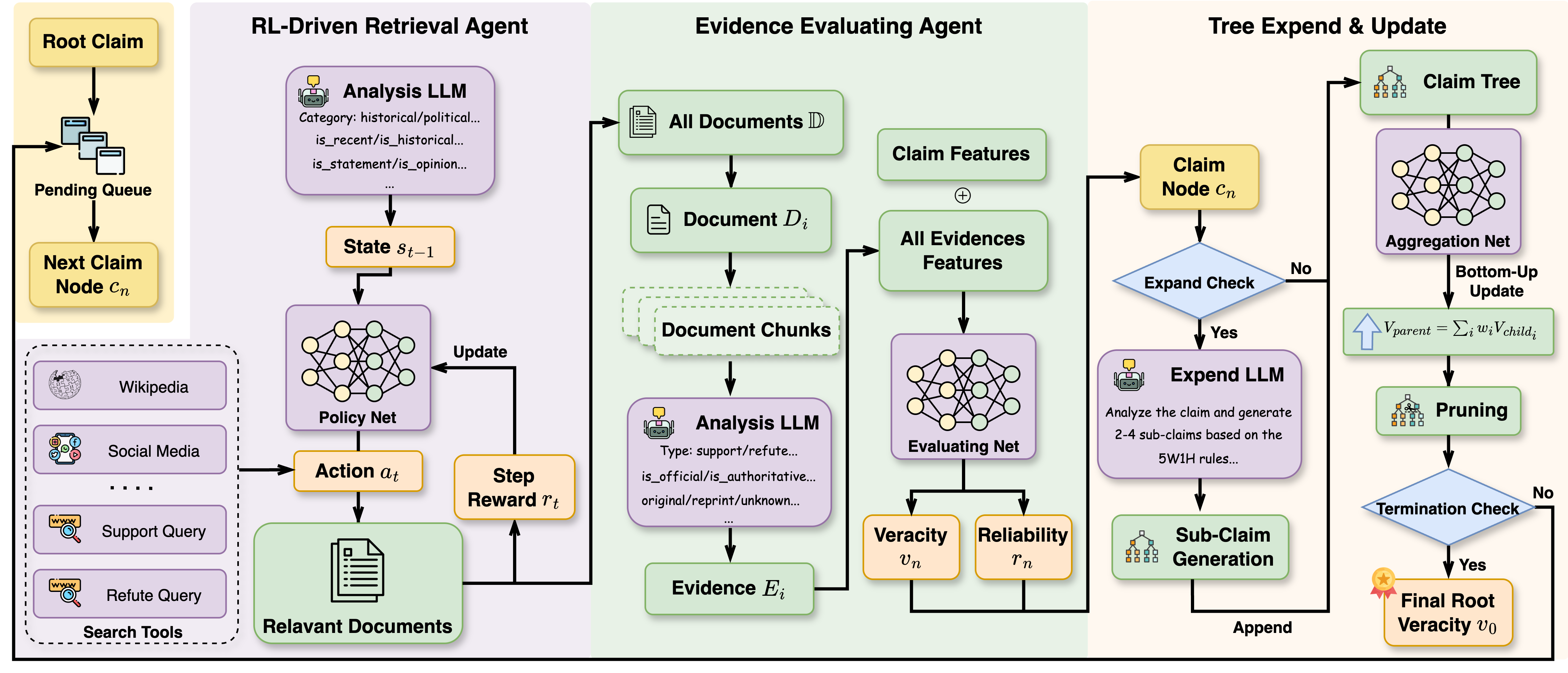}
   \caption{An overview of ToE framework.}
   \label{fig:workflow}
\end{figure*}

With the rapid development of AI, LLM-based agents address the knowledge staleness problem by invoking external search tools to retrieve relevant information as supplementary context~\cite{rag,toolformer}. However, this reliance on third-party sources introduces new vulnerabilities. Prior work has demonstrated that AI-generated misinformation injected into retrieval results can mislead LLMs into producing incorrect conclusions, with potentially severe consequences in domains such as medicine and finance~\cite{poisonedrag,fakegpt}. While deep learning-based detection methods have been proposed to identify such content~\cite{fakenews_survey}, they rely on surface-level stylistic features and fail to generalize to AI-generated misinformation that differs from truthful content only in subtle factual details. For instance, a claim such as ``Tim Cook is Apple's CEO'' can be corrupted to ``Tim Cook became OpenAI's CEO yesterday,'' a statement that is stylistically indistinguishable from genuine news yet factually false.
Recent work leverages the reasoning capabilities of LLMs to assist in verifying the reliability of information. F3 framework guides LLMs through stepwise logical analysis and evidence consistency assessment via prompt engineering techniques such as zero-shot chain-of-thought reasoning and deductive generation, producing a veracity judgment for the input claim~\cite{f3}. TELLER decomposes fake news detection into a set of structured evaluation questions, directing LLMs to analyze news from dimensions including factual accuracy, contextual consistency, and source credibility, then aggregates the dimension-level scores through interpretable logical rules to reach a final verdict~\cite{teller}. However, these methods predominantly rely on LLMs to analyze content provided by the claim content, and show limited performance against AI-generated misinformation. STEEL adopts a multi-round retrieval-augmented strategy in which an LLM assesses the confidence of initial retrieval results and automatically generates refined queries to iteratively collect additional web evidence when the current evidence is insufficient, alleviating the dependence on static knowledge bases~\cite{steel}. AdSent rewrites input claims into sentiment-neutralized variants to force the veracity classifier to rely solely on factual content, improving robustness against sentiment-manipulated misinformation~\cite{adsent}. However, it provides limited defense against misinformation grounded in fabricated facts.

\section{Method}

To counter the potential impact of false and misleading information, we propose ToE, a hierarchical evidence reasoning framework for automated fact-checking. ToE models the verification of a claim as a dynamically growing argument tree. At each step, it collects and evaluates evidence from heterogeneous sources — including Wikipedia, Arxiv, fact-checking platforms, search engine results, and social media. Based on the evaluation results, new subtrees are expanded to progressively refine the veracity judgment. This process ultimately produces an interpretable evidence tree that traces the full reasoning process.

As depicted in Fig.~\ref{fig:workflow}, the system initializes the argument tree with the claim under investigation as the root node and enters the main iterative loop. At each iteration, the priority of each pending node is computed dynamically based on the uncertainty of its parent and its own estimated importance, so that nodes with the greatest expected contribution to the final judgment are processed first. For each selected node, the system analyzes semantic features of the claim, such as its category and geographic scope, and generates three types of search queries: general background queries, supporting evidence queries, and counter-evidence queries. This design ensures comprehensive and objective evidence collection and reduces confirmation bias. A retrieval agent trained via reinforcement learning then executes searches autonomously across heterogeneous sources, including Wikipedia, PolitiFact, social media, and the general web, and decides dynamically when to stop retrieval. The collected content is parsed by an LLM to extract evidence snippets directly relevant to the claim, each annotated with attributes such as stance and source credibility, forming a structured evidence set. Based on this evidence, the system computes two core scores for the current node: veracity, which measures the probability that the claim is true, and reliability, which measures how strongly the current evidence supports that judgment.

After each node is evaluated, scores are propagated bottom-up through the tree via an aggregation network, continuously updating the root node's overall judgment. If a node's reliability is insufficient, indicating that the available evidence does not yet support a confident verdict, the system invokes an LLM to decompose the claim into finer-grained, more verifiable sub-claims along various dimensions, covering time, location, person, event, cause, and information source, and adds them as new child nodes for further processing. Concurrently, subtrees that have reached high reliability with a sufficiently decisive verdict are pruned to avoid redundant computation. The iterative process continues until the root node's judgment converges, a decisiveness threshold is met, or the maximum number of iterations is reached, ultimately producing a veracity score for the claim and a complete argument tree available for traceability and review.

\subsection{Theoretical Foundations}

The verification process in ToE can be understood through two complementary theoretical lenses: a decision-theoretic formulation that characterizes the structure of the search problem, and an information-theoretic interpretation that justifies the design of the reward signal driving the retrieval agent.

\paragraph{POMDP Formulation.}
The core challenge in automated fact-checking is that the ground-truth veracity $v^* \in [0,1]$ of a claim $c$ is a latent variable that cannot be directly observed. The system can only accumulate evidence through sequential search actions and must form a judgment under persistent uncertainty. This structure maps naturally onto a Partially Observable Markov Decision Process (POMDP)~\cite{POMDP}, defined as the tuple $(\mathcal{S}, \mathcal{A}, \mathcal{O}, \mathcal{T}, \mathcal{R})$, with components as follows.
The \textbf{state} $s_t = (\mathcal{E}_t, f_c)$ is the joint representation of the evidence set $\mathcal{E}_t$ accumulated up to step $t$ and the semantic features $f_c$ of the claim under investigation, where $f_c$ encodes attributes such as claim category and geographic scope. The \textbf{action} $a_t \in \mathcal{A}$ is selected from the eight-option discrete action space of the retrieval agent, covering heterogeneous source types including Wikipedia, arXiv, fact-checking platforms, social media, and a stop action. The \textbf{observation} $o_t \in \mathcal{O}$ is the set of documents retrieved after executing action $a_t$, which is subsequently processed by the evidence evaluation agent into structured evidence objects annotated with stance and credibility attributes. The \textbf{transition} $\mathcal{T}(s_{t+1} \mid s_t, a_t)$ describes how the evidence set is updated upon receiving new observations: $\mathcal{E}_{t+1} = \mathcal{E}_t \cup \{o_t\}$. The \textbf{belief state} $b_t = (v_{n,t}, r_{n,t})$ is maintained by the evaluation network, which maps the current evidence set and claim features to a veracity score $v_{n,t}$ and a reliability score $r_{n,t}$. Crucially, the evaluation network serves as a belief state estimator rather than an independently trained value network: it is a domain-driven module that produces a posterior estimate of $v^*$ given all available evidence, which distinguishes ToE from standard Actor-Critic architectures where the value function is learned separately from the environment model.

\paragraph{Information-Seeking Objective.}
The reward signal driving the retrieval agent admits a natural information-theoretic interpretation. At each step $t$, the agent receives a step reward proportional to the reliability gain $\Delta r_t = r_{n,t} - r_{n,t-1}$. We interpret this gain as an approximation of the conditional mutual information between the latent veracity $v^*$ and the new observation $o_t$ given prior evidence~\cite{submodular}:
\begin{equation}
\Delta r_t \approx I(v^*;\, o_t \mid \mathcal{E}_{t-1}) = H(v^* \mid \mathcal{E}_{t-1}) - H(v^* \mid \mathcal{E}_t).
\end{equation}
Each search action reduces the system's epistemic uncertainty about the true veracity of the claim, and the reliability score $r_{n,t}$ serves as a tractable surrogate for the posterior confidence over $v^*$.

To establish a formal performance guarantee for the greedy retrieval policy, we assume that the reliability function $r: 2^{\mathcal{O}} \to [0,1]$ is submodular with respect to the evidence set, meaning that the marginal reliability gain from any new observation $o$ satisfies
\begin{equation}
r(\mathcal{E}_A \cup \{o\}) - r(\mathcal{E}_A) \geq r(\mathcal{E}_B \cup \{o\}) - r(\mathcal{E}_B),
\end{equation}
for any $\mathcal{E}_A \subseteq \mathcal{E}_B$. This assumption captures the diminishing returns property of evidence accumulation: once a sufficient evidential foundation has been established, redundant or corroborating observations contribute progressively less to resolving uncertainty. This is empirically well-motivated in the fact-checking setting, where the first few high-quality sources typically determine the verdict and additional evidence yields smaller marginal gains. Under this assumption, the greedy policy that selects at each step the action maximizing the expected reliability gain is guaranteed to achieve performance no worse than $(1 - 1/e)$ of the optimal non-greedy policy, providing a formal justification for the step reward design.

We further note that the approximation in the mutual information interpretation is bounded. If the evaluation network's reliability estimate satisfies $|r_{n,t} - \tilde{r}_{n,t}| \leq \epsilon$ for all $t$, where $\tilde{r}_{n,t}$ denotes the ideal reliability derived from the true posterior, then the per-step deviation of the reward signal from the true information gain is uniformly controlled, as formalized below.

\begin{theorem}\label{thm:error_bound}
Let $\tilde{r}_{n,t}$ denote the ideal reliability score derived from the true posterior over $v^*$, and suppose the evaluation network satisfies $|r_{n,t} - \tilde{r}_{n,t}| \leq \epsilon$ for all $t$. Then the per-step deviation of the reward signal from the true conditional mutual information is uniformly bounded by $\left|\Delta r_t - I(v^*;\, o_t \mid \mathcal{E}_{t-1})\right| \leq 2\epsilon$ for all $t \in \{1, \dots, T\}$, and consequently the average cumulative deviation over the retrieval horizon satisfies $\frac{1}{T}\sum_{t=1}^{T} \left|\Delta r_t - I(v^*;\, o_t \mid \mathcal{E}_{t-1})\right| \leq 2\epsilon$, independently of $T$.
\end{theorem}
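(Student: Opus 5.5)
The argument is a two-term triangle inequality, once we fix what ``ideal reliability derived from the true posterior'' means. We read $\tilde{r}_{n,t}$ as an affine function of the posterior entropy, normalized so that its increments equal the entropy reductions:
\begin{equation}
\tilde{r}_{n,t} = \kappa - H(v^* \mid \mathcal{E}_t)
\end{equation}
for a constant $\kappa$ independent of $t$. Here $H$ is measured in units making $\tilde{r}_{n,t}\in[0,1]$, for instance by discretizing $v^*$ or normalizing by $H(v^*)$. Under this reading the ideal increment is, by the chain-rule identity already displayed in the paper,
\begin{equation}
\Delta\tilde{r}_t := \tilde{r}_{n,t}-\tilde{r}_{n,t-1} = H(v^*\mid\mathcal{E}_{t-1})-H(v^*\mid\mathcal{E}_t) = I(v^*;\,o_t\mid\mathcal{E}_{t-1}).
\end{equation}
The second equality uses $\mathcal{E}_t=\mathcal{E}_{t-1}\cup\{o_t\}$ from the transition model.

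\textbf{Step 1 (this identification).} This is the step I expect to be the real obstacle. The theorem is only meaningful if the ideal reliability tracks posterior entropy exactly up to an additive constant. With a nonlinear map $\phi(H)$, the increment would equal the mutual information only to first order, and the clean $2\epsilon$ bound would acquire an extra term. I would state this identification explicitly as the definition of $\tilde r$. I would then remark that any other choice needs a Lipschitz constant $L$ for $\phi$ and an additional approximation term.

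\textbf{Step 2 (per-step bound).} Next we rewrite the deviation as a difference of two errors and apply the triangle inequality, using the hypothesis at times $t$ and $t-1$:
\begin{equation}
\left|\Delta r_t - I(v^*;o_t\mid\mathcal{E}_{t-1})\right| = \left|(r_{n,t}-\tilde r_{n,t}) - (r_{n,t-1}-\tilde r_{n,t-1})\right| \le \epsilon+\epsilon = 2\epsilon.
\end{equation}
This requires the hypothesis to hold also at $t=0$, i.e.\ for the initial evidence set. I would make this explicit by reading ``for all $t$'' as $t\in\{0,\dots,T\}$.

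\textbf{Step 3 (averaging).} Summing the uniform per-step bound over $t=1,\dots,T$ gives at most $2\epsilon T$, and dividing by $T$ gives $2\epsilon$, independent of $T$. As an optional remark, the signed cumulative deviation telescopes:
\begin{equation}
\sum_{t=1}^{T}\bigl(\Delta r_t-\Delta\tilde r_t\bigr)=(r_{n,T}-\tilde r_{n,T})-(r_{n,0}-\tilde r_{n,0}),
\end{equation}
which is bounded by $2\epsilon$ in total. This is stronger than the stated average bound, but it is not needed for the theorem.
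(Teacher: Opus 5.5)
Your proof is correct and is essentially the paper's. Both use the triangle inequality $|\Delta r_t-\Delta\tilde r_t|\le|r_{n,t}-\tilde r_{n,t}|+|r_{n,t-1}-\tilde r_{n,t-1}|\le 2\epsilon$. Your explicit choice $\tilde r_{n,t}=\kappa-H(v^*\mid\mathcal{E}_t)$ supplies the exact identity $\Delta\tilde r_t=I(v^*;o_t\mid\mathcal{E}_{t-1})$ that the paper's first inequality silently requires, since the paper states it only as ``$\approx$''. Your observation that the hypothesis is needed at $t=0$ is likewise implicit in the paper.

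One small correction to your optional aside: the telescoped bound $\bigl|\sum_{t}(\Delta r_t-\Delta\tilde r_t)\bigr|\le 2\epsilon$ controls the signed sum, so it does not by itself imply the stated bound on the average of absolute deviations. It is complementary rather than stronger, though it is the fact you would want for a horizon-independent policy-level gap.
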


\begin{proof}
By the triangle inequality and the pointwise approximation $\Delta\tilde{r}_t \approx I(v^*;\, o_t \mid \mathcal{E}_{t-1})$, each term satisfies:
\begin{align}
\left|\Delta r_t - I(v^*;\, o_t \mid \mathcal{E}_{t-1})\right| &\leq \left|\Delta r_t - \Delta\tilde{r}_t\right| \nonumber \\
&= \left|(r_{n,t} - r_{n,t-1}) - (\tilde{r}_{n,t} - \tilde{r}_{n,t-1})\right| \nonumber \\
&\leq \left|r_{n,t} - \tilde{r}_{n,t}\right| + \left|r_{n,t-1} - \tilde{r}_{n,t-1}\right| \nonumber \\
&\leq 2\epsilon.
\end{align}
This bound holds uniformly for all $t$, so the average over $T$ steps is also bounded by $2\epsilon$.
\end{proof}

\paragraph{Reward Fidelity and Policy-Level Guarantee.}
This per-step guarantee further implies a bound on the policy-level performance gap. Let $\pi^*$ denote the optimal policy under the true information gain reward $I(v^*; o_t \mid \mathcal{E}_{t-1})$, and let $\hat{\pi}$ denote the policy optimized under the approximate reward $\Delta r_t$. By the simulation lemma for finite-horizon MDPs, a uniform per-step reward perturbation of $\delta$ induces a value function gap of at most $2T\delta$. Substituting the per-step bound $\delta = 2\epsilon$ from Theorem~\ref{thm:error_bound}, the value gap over a fixed horizon $T$ satisfies:
\begin{equation}
\left|V^{\pi^*} - V^{\hat{\pi}}\right| \leq 4T\epsilon,
\end{equation}
where $V^{\pi} = \mathbb{E}_{\pi}\!\left[\sum_{t=1}^T I(v^*; o_t \mid \mathcal{E}_{t-1})\right]$ denotes the expected cumulative information gain under policy $\pi$. The per-step bound of $2\epsilon$ is independent of the horizon length $T$ and ensures that the approximate reward preserves the correct greedy action ranking whenever the gap between the best and second-best actions exceeds $4\epsilon$. In our setting, the retrieval agent operates over a short fixed horizon with at most 10 retrieved documents per node and a maximum tree depth of 5, so the cumulative deviation remains small in absolute terms relative to the reliability range $[0, 1]$. These results formalize the conditions under which the learned reliability estimate is a faithful proxy for the true information gain, justifying the use of the surrogate reward in policy optimization.

\subsection{RL-Driven Retrieval Agent}

As one of the core components of ToE, the retrieval agent is responsible for searching the web for content relevant to the claim under investigation. The agent first analyzes the input claim to extract a set of semantic features, including its category (e.g., historical, scientific, or political) and attributes such as whether it describes a recent event, a historical fact, or a factual statement. These claim-level features are combined with step-level search state features, including the number of searches performed, the number of supporting snippets retrieved, and the number of counter-evidence snippets retrieved, to form the current state $s_t$. The state is fed into a policy network $\pi_\theta$, which selects an action $a_t$ from a discrete action space.
The action space covers eight options: searching general web sources, searching for counter-evidence, querying arXiv for academic papers, querying Wikipedia, searching social media platforms, querying fact-checking sites such as PolitiFact and Snopes, searching Reddit discussions, and stopping the search. This design reflects the observation that different types of claims are best verified through different sources. Established facts benefit from dedicated fact-checking sites, encyclopedic knowledge is well served by Wikipedia, and time-sensitive claims, particularly statements attributed to public figures or official bodies, are often most efficiently verified via social media. The explicit counter-evidence action further ensures that the agent actively seeks disconfirming information rather than collecting only supporting content.
Given the selected action, the agent uses an LLM to generate a search query tailored to the claim's content and then invokes the corresponding tool to retrieve results from the target source. After each search step, a step reward $r_t$ is computed and stored together with the transition in a trajectory buffer. Once the buffer reaches the designated capacity, the policy network is updated online using Group Relative Policy Optimization (GRPO)~\cite{grpo}, a reinforcement learning algorithm that estimates advantages by comparing returns within a group of sampled trajectories, eliminating the need for a separate value network.

The concrete policy state is a 29-dimensional vector. It contains claim-level attributes, including a 6-dimensional category indicator, a 3-dimensional locale indicator, and binary flags for recent events, historical facts, factual statements, opinions, and predictions. It also records search progress, including the number and ratio of completed searches, and evidence status, including the total number of retrieved evidence items, their stance breakdown, per-source search counts, and whether an official or authoritative source has been found. The policy network is a feed-forward model with hidden dimensions 128, 128, and 64, followed by a softmax action head.

% \begin{figure}[ht]
%    \centering
%    \includegraphics[width=0.8\linewidth]{policy_net.png}
%    \caption{Detail of Policy Network.}
%    \label{fig:policy}
% \end{figure}

At each search step $t$, the reward is computed as
\begin{equation}
r_t = w_r \cdot \Delta \text{rel}_t + w_v \cdot |\Delta \text{ver}_t| + w_e \cdot \min(n_t^{\text{new}}, 4) \cdot \delta_e + b_t - \delta_{\text{step}},
\end{equation}
where $\Delta \text{rel}_t$ and $|\Delta \text{ver}_t|$ denote the changes in reliability and veracity, $n_t^{\text{new}}$ is the number of newly retrieved evidence items, $b_t$ includes high-reliability and authoritative-source bonuses, and $\delta_{\text{step}}$ penalizes unnecessary searches. Table~\ref{tab:rl_hyperparams} summarizes the main hyperparameters. During the first $N_{\text{warm}}$ claims, the agent follows a heuristic warm-up policy based on claim category and recency, after which GRPO updates are triggered whenever the trajectory buffer accumulates $G$ completed trajectories.

\begin{table}[ht]
\centering
\caption{Hyperparameters of the Policy Network.}
\label{tab:rl_hyperparams}
\begin{tabular}{lll}
\hline
\textbf{Parameter} & \textbf{Symbol} & \textbf{Value} \\
\hline
Reliability improvement weight    & $w_r$                  & 1.0  \\
Veracity change weight            & $w_v$                  & 0.5  \\
Evidence bonus weight             & $w_e$                  & 0.05 \\
Per-step penalty                  & $\delta_{\text{step}}$ & 0.01 \\
High-confidence threshold         & $\tau_h$               & 0.8  \\
Authoritative source bonus        & $b_{\text{auth}}$      & 0.1  \\
High-reliability bonus            & $b_{\text{rel}}$       & 0.2  \\
Group size                        & $G$                    & 8    \\
Discount factor                   & $\gamma$               & 0.99 \\
Clipping range                    & $\varepsilon$          & 0.2  \\
Entropy coefficient               & $\alpha$               & 0.01 \\
KL coefficient                    & $\beta$                & 0.1  \\
Learning rate                     & --                     & 1e-5 \\
Max gradient norm                 & --                     & 0.5  \\
Warm-up claims                    & $N_{\text{warm}}$      & 8    \\
Max search steps per claim        & --                     & 5    \\
\hline
\end{tabular}
\end{table}

\subsection{Evidence Evaluation Agent}

After the retrieval agent collects relevant documents, each document $D_i$ is processed individually. To avoid exceeding the context limit of the LLM, each document is first split into chunks. Each chunk is submitted to a cleaning LLM that extracts statements directly relevant to the claim. Once all chunks of a document are processed, the extracted statements are combined with the document's metadata, such as URL and author information, and submitted to an analysis LLM. This LLM determines the document's stance toward the claim (supporting, refuting, or neutral), whether it originates from an official or authoritative source, and whether it constitutes a primary report or a republication, among other attributes. The result is a structured evidence object $E_i$. After all documents are processed, the full evidence set $\mathbb{E}$ and the claim's semantic features are passed to the evaluation network, which produces two scores for the current node: a veracity score $v_n$ and a reliability score $r_n$.

\begin{figure}[ht]
   \centering
   \includegraphics[width=0.9\linewidth]{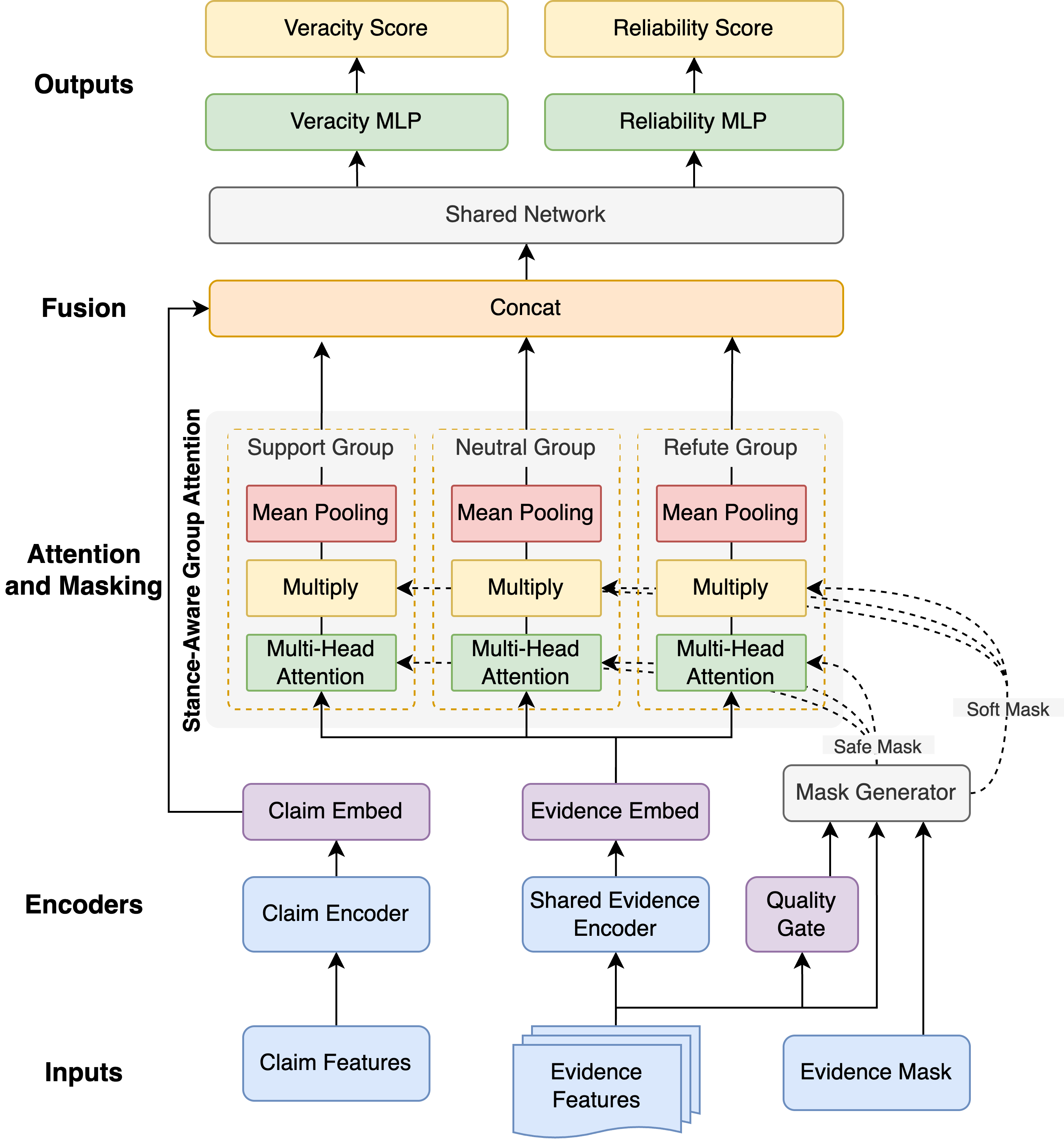}
   \caption{Detail of Evaluation Network.}
   \label{fig:scoring}
\end{figure}

As illustrated in Fig.~\ref{fig:scoring}, the evaluation network jointly predicts veracity and reliability through a stance-aware multi-path attention mechanism. Claim features and evidence features are first encoded by a claim encoder and a shared evidence encoder into high-dimensional claim embeddings and evidence embeddings, respectively. In parallel, the evidence features are also passed through a quality gate, a small multilayer network that produces a per-evidence quality weight reflecting its estimated relevance and credibility.
The encoded evidence then enters the stance-aware group attention module, where evidence objects are partitioned into three parallel processing tracks according to their annotated stance: supporting, neutral, and refuting. A mask generator takes as input the stance labels, the quality gate weights, and a binary evidence mask indicating which evidence slots are occupied, and produces two types of masks for each track. The safe mask is a hard binary mask that blocks invalid or cross-stance evidence from attending to the current track. The soft mask is a continuous weight derived from the quality gate output, used to scale attention outputs by evidence quality. Within each track, the evidence embeddings pass through a multi-head self-attention layer gated by the safe mask, enabling intra-stance contextual interaction. The attention outputs are then multiplied by the soft mask to inject quality priors, and mean pooling compresses the variable-length evidence set within each track into a fixed-dimensional stance aggregation vector.
In the fusion stage, the original claim embedding and the three stance aggregation vectors are concatenated along the feature dimension, forming a compact unified representation that encodes the claim content alongside the evidential signal from all three stances. This representation is passed through a shared network for global feature integration, after which it is routed to two parallel output heads: a veracity MLP and a reliability MLP, producing the final veracity score and reliability score, respectively.

The evaluation network uses a 12-dimensional claim feature vector and a padded $20 \times 8$ evidence matrix with a binary evidence mask. Claim features encode category, locale, and whether the claim describes a recent event, historical fact, or factual statement. Each evidence item encodes stance, source type, relevance, and credibility. Training targets are produced by an LLM judge, which assigns soft veracity and reliability scores based on evidential support, source authority, evidence quantity, cross-source consistency, and directness. Given targets $(\tilde{v}, \tilde{r})$, the network minimizes
\begin{equation}
\mathcal{L} = \text{BCE}(\hat{v},\, \tilde{v}) + \text{BCE}(\hat{r},\, \tilde{r}),
\end{equation}
where $\hat{v}$ and $\hat{r}$ are the predicted veracity and reliability scores. The model is optimized with Adam using a learning rate of $10^{-3}$ and gradient norm clipping at $1.0$.

\subsection{Tree Management}

Once a node obtains its veracity score $v_{n^*}$ and reliability score $r_{n^*}$ from the evaluation network, the tree management module determines how the argument tree should evolve. Algorithm~\ref{alg:toe} summarizes the full procedure, where three binary flags govern the control flow: the expansion flag $\delta_n$, the decisiveness flag $\phi_n$, and the convergence flag $\gamma$.
These flags are defined in terms of five thresholds: the expansion reliability threshold $\tau_r^{\text{exp}}$, the decision reliability threshold $\tau_r^{\text{dec}}$, the upper and lower veracity thresholds $\tau_v^{+}$ and $\tau_v^{-}$, and the convergence tolerance $\tau_c$. Specifically:
\begin{itemize}
    \item $\delta_n = 1$ if $r_n^{\text{self}} < \tau_r^{\text{exp}}$, indicating that the node's self-reliability is too low for a confident judgment and further evidence is needed; the root node is always expanded on its first evaluation regardless of this condition.
    \item $\phi_n = 1$ if $r_n^{\text{aggr}} > \tau_r^{\text{dec}}$ and $v_n^{\text{aggr}} \notin [\tau_v^{-}, \tau_v^{+}]$, indicating that the aggregated evidence is sufficiently reliable and the verdict is unambiguously true or false.
    \item $\gamma = 1$ if the range of $v_0$ over the last three iterations is below $\tau_c$ and $r_0^{\text{aggr}} > \tau_r^{\text{exp}}$, indicating that the root veracity score has stabilized and further iterations are unlikely to change the outcome.
\end{itemize}

\begin{algorithm}[ht]
\caption{Tree of Evidence Verification}
\label{alg:toe}
\textbf{Input}: Initial claim $c_0$, maximum iterations $T_{\max}$, and thresholds $\tau_r^{\text{exp}}, \tau_r^{\text{dec}}, \tau_v^{+}, \tau_v^{-}, \tau_c$.\\
\textbf{Output}: Final root veracity score $v_0$ and reliability score $r_0$.
\begin{algorithmic}[1]
\STATE Initialize tree $\mathcal{T}$ with root node $n_0 \leftarrow c_0$
\STATE Initialize queue $\mathcal{Q} \leftarrow \{n_0\}$
\FOR{$t = 1, 2, \dots, T_{\max}$}
    \IF{$\mathcal{Q}$ is empty}
        \STATE \textbf{break}
    \ENDIF
    \STATE $n^* \leftarrow \arg\max_{n \in \mathcal{Q}} \textsc{Priority}(n)$
    \STATE $\mathcal{Q} \leftarrow \mathcal{Q} \setminus \{n^*\}$
    \STATE $\mathcal{E} \leftarrow \textsc{GetEvidence}(n^*)$
    \STATE $v_{n^*}, r_{n^*} \leftarrow \textsc{Evaluate}(n^*, \mathcal{E})$
    \IF{$\delta_{n^*} = 1$}
        \STATE $\mathcal{S} \leftarrow \textsc{Decompose}(n^*)$
        \STATE $\mathcal{T}.\textsc{Add}(n^*, \mathcal{S})$
        \STATE $\mathcal{Q} \leftarrow \mathcal{Q} \cup \mathcal{S}$
    \ENDIF
    \STATE $v_0, r_0 \leftarrow \textsc{Aggregate}(\mathcal{T})$
    \STATE $\mathcal{T}, \mathcal{Q} \leftarrow \textsc{Prune}(\mathcal{T}, \mathcal{Q})$
    \IF{$\gamma = 1$ \textbf{or} $\phi_0 = 1$}
        \STATE \textbf{break}
    \ENDIF
\ENDFOR
\STATE \textbf{return} $(v_0, r_0)$
\end{algorithmic}
\end{algorithm}

If the expansion flag $\delta_{n^*} = 1$, indicating that the self-reliability $r_{n^*}$ falls below $\tau_r^{\text{exp}}$ and the node is eligible for expansion, the system invokes a decomposition LLM. The LLM breaks the claim into two to four sub-claims, each targeting a distinct verification dimension: geographic accuracy, entity identity, core event, causal relationship, source credibility, quantitative accuracy, or contextual background. Each sub-claim is assigned an importance weight, with all weights summing to one, and the resulting sub-claims are appended to the argument tree $\mathcal{T}$ as children of the current node and added to the processing queue $\mathcal{Q}$.
After any structural change to the tree, scores are propagated bottom-up via the aggregation network. For each node, the network takes as input the node's own veracity and reliability scores alongside those of all processed children that reflect the combined evidence from the entire subtree. This update refreshes the root node's scores $v_0$ and $r_0$ to reflect all available evidence. As illustrated in Fig.~\ref{fig:aggr}, the aggregation network encodes the node's own 16-dimensional feature vector and the feature vectors of its evaluated children, applies self-attention over child representations, and uses cross-attention from the parent node to aggregate child evidence into updated veracity and reliability scores.

\begin{figure}[ht]
   \centering
   \includegraphics[width=1.0\linewidth]{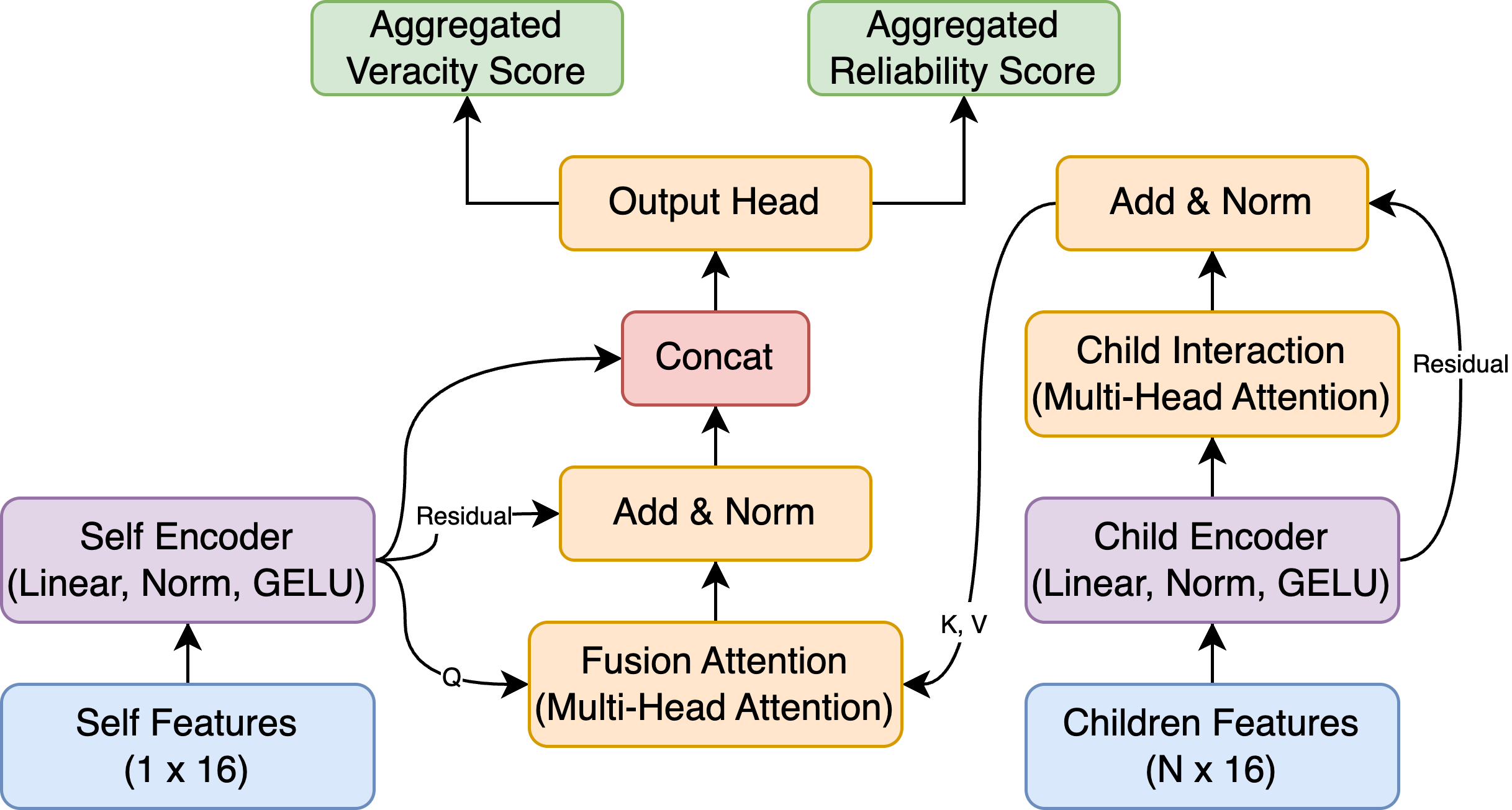}
   \caption{Detail of Tree Aggregation Network.}
   \label{fig:aggr}
\end{figure}

Before neural training, ToE uses a rule-based bottom-up aggregation scheme as an interpretable fallback and a source of soft supervision. Leaf and unexpanded nodes inherit their self-assessed scores. Internal nodes compute veracity as a reliability- and importance-weighted average of child veracity scores, while reliability is adjusted by the number of available children and the consistency of their veracity estimates. The neural aggregator is then trained bottom-up on completed trees. Its loss combines a supervised root loss and an imitation loss over non-root nodes:
\begin{equation}
\mathcal{L}_{\text{root}} = w_{\text{root}} \left[ \text{BCE}(\hat{v}_{\text{root}},\, y_v) + \text{BCE}(\hat{r}_{\text{root}},\, y_r) \right],
\end{equation}
\begin{equation}
\mathcal{L}_{\text{imit}} = w_{\text{imit}} \cdot \frac{1}{|\mathcal{N}|} \sum_{i \in \mathcal{N}} \left[ \text{BCE}(\hat{v}_i,\, \tilde{v}_i) + \text{BCE}(\hat{r}_i,\, \tilde{r}_i) \right],
\end{equation}
with $w_{\text{root}} = 1.0$ and $w_{\text{imit}} = 0.1$. The aggregation network is optimized with AdamW using a learning rate of $10^{-4}$, weight decay $10^{-5}$, and gradient norm clipping at $1.0$.

Following aggregation, the system performs a top-down pruning pass. For any non-leaf node whose decisiveness flag $\phi_n = 1$, meaning its aggregated reliability $r_n^{\text{aggr}}$ exceeds $\tau_r^{\text{dec}}$ and its aggregated veracity $v_n^{\text{aggr}}$ falls outside $[\tau_v^{-}, \tau_v^{+}]$, all pending descendants are removed from $\mathcal{Q}$, avoiding unnecessary computation on branches that no longer affect the outcome.
The loop then checks whether the convergence flag $\gamma = 1$ or the root's decisiveness flag $\phi_0 = 1$. Convergence holds when the range of $v_0$ over the last three iterations drops below $\tau_c$ and $r_0^{\text{aggr}}$ exceeds $\tau_r^{\text{exp}}$. If either condition is met, the system terminates and returns $v_0$ and $r_0$ as the final verdict. Otherwise, the next node is selected from $\mathcal{Q}$ according to its priority score, which is computed as a weighted combination of the parent node's uncertainty and the node's own importance weight assigned at decomposition time, and the next iteration begins.

\section{Evaluation}

\subsection{Experimental Setup}

\textbf{Datasets.}
We evaluate ToE on three publicly available fact-checking benchmarks. The LIAR dataset~\cite{liar} contains 12,836 human-labeled short statements about political news, each annotated with one of six fine-grained veracity labels along with speaker metadata such as party affiliation and job title. PolitiFact~\cite{politifact} comprises 21,152 expert-verified statements sourced from the PolitiFact website, spanning claims made between 2008 and 2022 across six verdict categories. Check-COVID~\cite{check_covid} is a domain-specific benchmark of 1,504 COVID-19 related claims drawn from news sources, each verified against evidence from scientific journal articles and labeled as supported, refuted, or lacking sufficient information.
To enable consistent three-class accuracy evaluation across all datasets, we map each dataset's original labels to a unified label set: true, false, and uncertain. For LIAR and PolitiFact, the labels pants-fire, false, and barely-true are mapped to false; half-true is mapped to uncertain; and mostly-true and true are mapped to true. The two datasets differ only in that PolitiFact uses mostly-false in place of barely-true, which is likewise mapped to false. For Check-COVID, the labels SUPPORT, REFUTE, and NOTENOUGHINFO are mapped to true, false, and uncertain, respectively. To further assess the robustness of ToE against AI-generated misinformation, we construct \textbf{AdvFact}, an adversarial benchmark designed to simulate sophisticated false claims that are stylistically convincing yet factually incorrect. We randomly sample 100 false statements from the LIAR dataset and apply two complementary poisoning strategies to generate 100 samples each: FakeGPT~\cite{fakegpt} rewrites each statement into a more fluent and detail-rich false narrative, while PoisonedRAG~\cite{poisonedrag} injects adversarially crafted passages into the retrieval corpus to mislead evidence-based reasoning. The resulting dataset contains 200 samples in total, all labeled as false, and is evaluated directly without any additional training.

\textbf{Comparison Methods.}
We compare ToE against a range of LLM-based fact-checking methods. Direct prompting feeds the raw claim directly to an LLM and asks for a veracity judgment without any additional context. Z-CoT and DefGen~\cite{f3} are two structured prompting strategies proposed by the same work, eliciting step-by-step reasoning through zero-shot chain-of-thought and deductive generation, respectively, without external retrieval. AFaCTA~\cite{afacta} employs a multi-agent voting mechanism to aggregate judgments from multiple LLM instances. TELLER~\cite{teller} models the verification process through a dual cognitive and decision-making system. STEEL~\cite{steel} performs iterative multi-round retrieval to progressively gather evidence before reaching a verdict. AdSent~\cite{adsent} neutralizes sentiment signals in the input claim to improve veracity judgment. To assess whether the gains of ToE generalize across model scales, we instantiate all methods on two backbone LLMs: DeepSeek-V3.2~\cite{deepseek-v32}, a large-scale model, and gpt-oss-20b~\cite{gpt-oss}, a smaller open-weight reasoning model released by OpenAI.

\textbf{ToE Settings.}
The argument tree is constrained to a maximum depth of 5 and a maximum of $T_{\max} = 20$ iterations. The thresholds are set as $\tau_r^{\text{exp}} = 0.7$ (expansion), $\tau_r^{\text{dec}} = 0.7$ (decision), $\tau_v^{+} = 0.7$, $\tau_v^{-} = 0.3$ (veracity bounds), and $\tau_c = 0.1$ (convergence). For the retrieval agent, each search action returns at most 3 results, and the total number of retrieved documents per node is capped at 10. The final veracity score $v \in [0, 1]$ is mapped to a three-way verdict: $v < 0.4$ is classified as \textit{False}, $0.4 \leq v \leq 0.6$ as \textit{Uncertain}, and $v > 0.6$ as \textit{True}. To validate that our approach is \textbf{dataset-agnostic}, we train exclusively on the training split of the LIAR dataset and evaluate directly on its test split as well as all other datasets without any dataset-specific fine-tuning or adaptation.

\subsection{Results and Analysis}

\begin{table*}[htbp]
    \centering
    \caption{Average Accuracy Comparison of Different Methods Across Datasets and LLMs}
    \label{tab:accuracy_comparison}
    \begin{tabular}{l|cccc|cccc}
        \hline
        \multirow{2}{*}{\textbf{Method}} & \multicolumn{4}{c|}{\textbf{DeepSeek-V3.2}} & \multicolumn{4}{c}{\textbf{GPT-OSS-20B}} \\
        \cline{2-9}
        & \textbf{LIAR} & \textbf{PolitiFact} & \textbf{CheckCOVID}  & \textbf{AdvFact} & \textbf{LIAR} & \textbf{PolitiFact} & \textbf{CheckCOVID} & \textbf{AdvFact} \\
        \hline
        Direct  & 0.38 & 0.32 & 0.28 & 0.16 & 0.24 & 0.40 & 0.26 & 0.08 \\
        AFacTA  & 0.32 & 0.44 & 0.36 & 0.12 & 0.32 & 0.48 & 0.32 & 0.12 \\
        ZCoT    & 0.42 & 0.60 & 0.52 & 0.36 & 0.34 & 0.22 & 0.44 & 0.20 \\
        DefGen  & 0.50 & 0.38 & 0.36 & 0.28 & 0.38 & 0.56 & 0.40 & 0.24 \\
        STEEL   & 0.46 & 0.60 & 0.40 & 0.16 & 0.20 & 0.38 & 0.46 & 0.14 \\
        TELLER  & 0.46 & 0.68 & 0.56 & 0.40 & 0.60 & \textbf{0.64} & \textbf{0.52} & 0.32 \\
        Adsent   & 0.40 & 0.46 & 0.36 & 0.30 & 0.44 & 0.48 & 0.22 & 0.52 \\
        \hline
        \textbf{ToE} & \textbf{0.62} & \textbf{0.72} & \textbf{0.60} & \textbf{0.64} & \textbf{0.68} & 0.62 & 0.48 & \textbf{0.54} \\
        \hline
    \end{tabular}
\end{table*}

Table~\ref{tab:accuracy_comparison} compares ToE against six baseline methods across four datasets and two LLMs. In each trial, 50 claims are sampled from each dataset using a fixed random seed, and all methods are prompted to base their judgments solely on the provided input to ensure a fair comparison. Each experiment is repeated ten times and the median accuracy is reported.
The four datasets cover distinct claim domains, and ToE performs consistently across all of them. The improvement is most visible on AdvFact, indicating that the method generalizes to adversarially framed claims. Notably, the Direct baseline, which prompts the LLM to judge claims without any external retrieval, achieves non-trivial accuracy on several datasets, likely because some claims in these benchmarks overlap with knowledge seen during pretraining, enabling correct judgments from parametric memory alone. The results also suggest that backbone capability influences method effectiveness: ZCoT, which relies heavily on chain-of-thought reasoning, scores 0.60 on PolitiFact under DeepSeek-V3.2 but drops to 0.22 under GPT-OSS-20B, reflecting its sensitivity to the underlying model's reasoning capacity. ToE improves over the Direct baseline across all settings, confirming that structured retrieval and evidence aggregation provide reliable gains beyond parametric knowledge.

\begin{figure}[ht]
   \centering
   \includegraphics[width=1.0\linewidth]{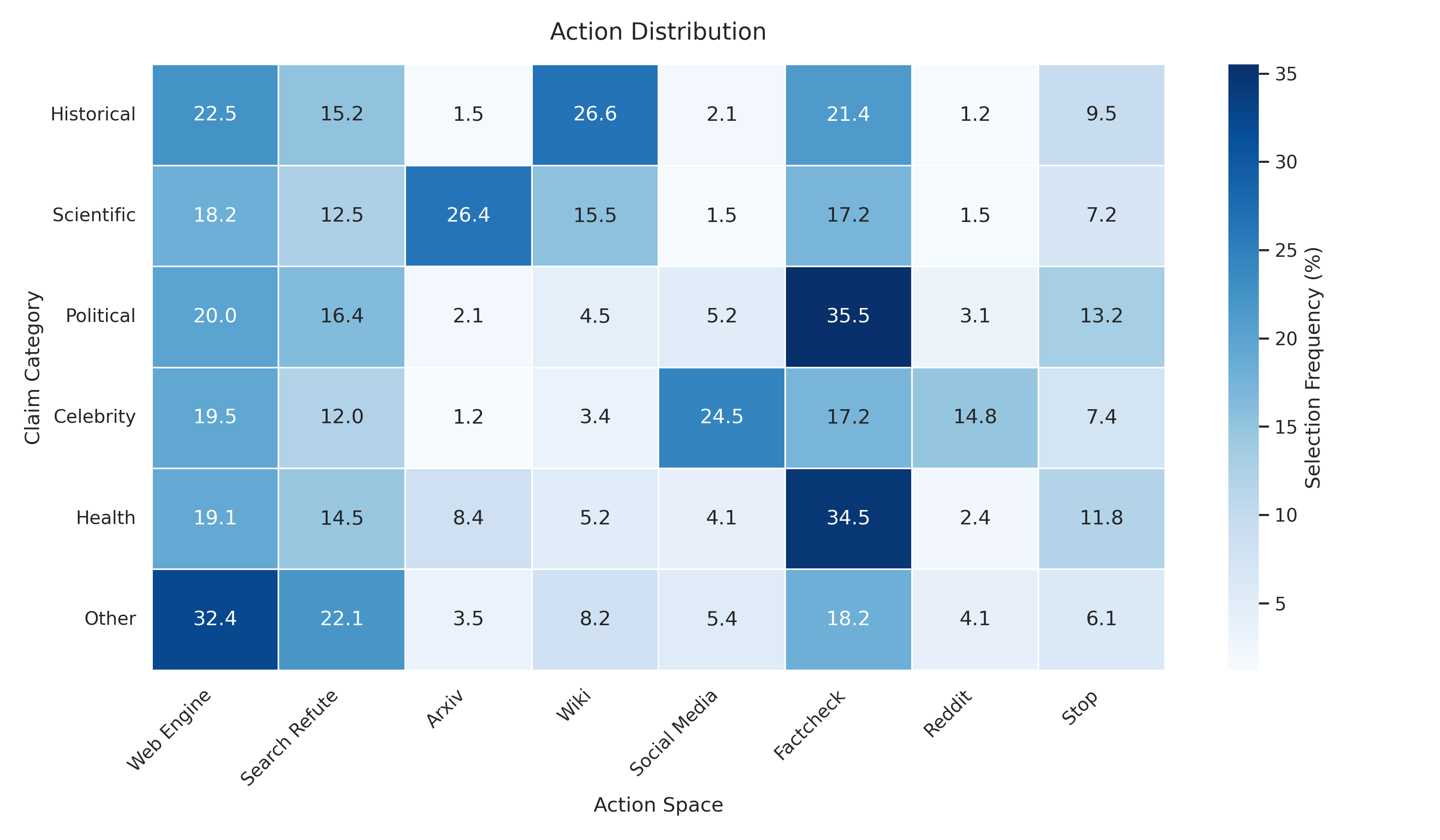}
   \caption{Heatmap of Action Distribution.}
   \label{fig:heatmap}
\end{figure}

\begin{table*}[t]
\centering
\caption{Ablation study of the search tool space.}
\label{tab:action_ablation}
\begin{tabular}{lccccc}
\toprule
\textbf{Metrics} & \textbf{Overall Acc.} & \textbf{Acc. (TRUE)} & \textbf{Acc. (FALSE)} & \textbf{Acc. (UNCERTAIN)}  & \textbf{Avg. Steps} \\
\midrule
Full Tool Space (All Actions) & 80.0\% & 91.6\% & 75.0\% & 66.7\% & 5.4 \\
\midrule
w/o Academic (\textit{ArXiv}) & 73.3\% & 75.0\%  & 83.3\% & 50.0\% & 6.8 \\
w/o Counter-Evidence (\textit{RefuteSearch}) & 63.3\% & 83.3\% & 50.0\% & 50.0\% & 4.0 \\
w/o Factcheck (\textit{Wiki, PolitiFact}) & 56.6\% & 50.0\% & 75.0\% & 33.3\% & 7.8 \\
w/o Social Media (\textit{Twitter, Reddit}) & 63.3\% & 66.6\% & 83.3\% & 16.6\% & 3.8\\
\bottomrule
\end{tabular}
\end{table*}

To investigate the agent's adaptive search behavior, we manually collected 30 claims across six categories. Each category consists of five claims (two true, two false, and one uncertain) to evaluate performance across different veracity labels. run ToE on all samples, and record the frequency with which the retrieval agent selects each evidence source. Figure~\ref{fig:heatmap} reveals clear category-specific patterns. Scientific claims show the highest Arxiv usage at 26.4\%, reflecting the agent's preference for peer-reviewed literature when evaluating technical assertions. Political claims and health claims exhibit the strongest reliance on Factcheck at 35.5\% and 34.5\% respectively, consistent with the availability of expert fact-checking records for these domains. Celebrity claims favor Social Media at 24.5\%, where public figures' statements and reactions are most readily documented. Historical claims distribute usage more evenly across Web Engine, Wiki, and Factcheck, suggesting that verifying historical statements benefits from multiple complementary sources. The Other category shows the highest Web Engine usage at 32.4\%, which is expected given the heterogeneous nature of claims that do not fall into a well-defined domain. Table~\ref{tab:action_ablation} presents an ablation study on the retrieval tool space, where we partition the available tools into four functional groups: academic sources (ArXiv), counter-evidence search (RefuteSearch), fact-checking databases (Wiki and PolitiFact), and social media platforms (Twitter and Reddit). Removing any single group consistently degrades overall accuracy, confirming that each tool category contributes complementary evidence. The most severe drop occurs when fact-checking sources are removed, reducing overall accuracy to 56.6\% and true-claim accuracy to 50.0\%, which suggests that structured fact-checking records provide the most direct and reliable signal for veracity assessment. These patterns indicate that the retrieval agent, trained with rule-based warm-up, learns to allocate search actions in a domain-aware manner rather than applying a uniform retrieval strategy across all claim types and highlight the particular importance of the design of a diverse tool space.

\section{Conclusion}

We presented ToE, a tree-structured argument reasoning framework for automated fact-checking. By organizing evidence into a hierarchical argument tree and guiding retrieval and evidence scoring through a trained agent, ToE decomposes complex claims into verifiable sub-arguments and aggregates evidence. Experiments across multiple datasets and claim domains demonstrate that the approach generalizes consistently, offering a viable path toward evidence-grounded claim verification resilient to GEO poisoning.

\bibliographystyle{IEEEtran}
\bibliography{main}

\end{document}